\typeout{Mindful Active Learning}

% These are the instructions for authors for IJCAI-19.

\documentclass{article}
\pdfpagewidth=8.5in
\pdfpageheight=11in
% The file ijcai19.sty is NOT the same than previous years'
\usepackage{ijcai19}

% Use the postscript times font!
\usepackage{times}
\usepackage{soul}
\usepackage{url}
\usepackage[hidelinks]{hyperref}
\usepackage[utf8]{inputenc}
\usepackage[small]{caption}
\usepackage{graphicx}
\usepackage{amsmath}
\usepackage{booktabs}
\usepackage{algorithm}
\usepackage{algorithmic}
\usepackage{color}
\urlstyle{same}
\usepackage{subfig}

\newtheorem{problem}{\textbf{Problem}}
\newtheorem{proof}{\textbf{Proof}}
\newtheorem{theorem}{Theorem}
\newcommand{\probref}[1]{Problem~\ref{problem:#1}}
\newcommand{\algref}[1]{Algorithm~\ref{alg:#1}}
\newtheorem{definition}{\textbf{Definition}}

\newcommand{\secref}[1]{Section~\ref{sec:#1}}
\newcommand{\figref}[1]{Figure~\ref{fig:#1}}

\usepackage{multirow}

% the following package is optional:
%\usepackage{latexsym} 

% Following comment is from ijcai97-submit.tex:
% The preparation of these files was supported by Schlumberger Palo Alto
% Research, AT\&T Bell Laboratories, and Morgan Kaufmann Publishers.
% Shirley Jowell, of Morgan Kaufmann Publishers, and Peter F.
% Patel-Schneider, of AT\&T Bell Laboratories collaborated on their
% preparation.

% These instructions can be modified and used in other conferences as long
% as credit to the authors and supporting agencies is retained, this notice
% is not changed, and further modification or reuse is not restricted.
% Neither Shirley Jowell nor Peter F. Patel-Schneider can be listed as
% contacts for providing assistance without their prior permission.

% To use for other conferences, change references to files and the
% conference appropriate and use other authors, contacts, publishers, and
% organizations.
% Also change the deadline and address for returning papers and the length and
% page charge instructions.
% Put where the files are available in the appropriate places.

\title{Mindful Active Learning}

\author{
Zhila Esna Ashari\footnote{Contact Author}\And
Hassan Ghasemzadeh\\
\affiliations
School of Electrical Engineering \& Computer Science
\\Washington State University, Pullman WA, 99164, USA\\
\emails
\{z.esnaashariesfahan, hassan.ghasemzadeh\}@wsu.edu
}
% Multiple author syntax (remove the single-author syntax above and the \iffalse ... \fi here)
% Check the ijcai19-multiauthor.tex file for detailed instructions

\begin{document}

\maketitle

\begin{abstract}
We propose a novel active learning framework for activity recognition using wearable sensors. Our work is unique in that it takes physical and cognitive limitations of the oracle into account when selecting sensor data to be annotated by the oracle. Our approach is inspired by human-beings' limited capacity to respond to external stimulus such as responding to a prompt on their mobile devices. This capacity constraint is manifested not only in the number of queries that a person can respond to in a given time-frame but also in the lag between the time that a query is made and when it is responded to. We introduce the notion of {\it mindful active learning} and propose a computational framework, called {\it EMMA\footnote{Software for EMMA ({\it Entropy-Memory Maximization}) is available at https://github.com/zhesna/EMMA.}}, to maximize the active learning performance taking informativeness of sensor data, query budget, and human memory into account. We formulate this optimization problem, propose an approach to model memory retention, discuss complexity of the problem, and propose a greedy heuristic to solve the problem. We demonstrate the effectiveness of our approach on three publicly available datasets and by simulating oracles with various memory strengths. We show that the activity recognition accuracy ranges from $21$\% to $97$\% depending on memory strength, query budget, and difficulty of the machine learning task. Our results also indicate that EMMA achieves an accuracy level that is, on average, $13.5$\% higher than the case when only informativeness of the sensor data is considered for active learning. Additionally, we show that the performance of our approach is at most $20$\% less than experimental upper-bound and up to $80$\% higher than experimental lower-bound. We observe that mindful active learning is most beneficial when query budget is small and/or oracle's memory is weak, thus emphasizing contributions of our work in human-centered mobile health settings and for elderly with cognitive impairments.
\end{abstract}

\section{Introduction}
Coupled with machine learning algorithms, mobile and wearable devices are being increasingly utilized to monitor physical and mental health, and to improve human wellbeing through clinical interventions. To train accurate machine learning models, an adequate number of labeled sensor data is required. Unfortunately, models that are trained based on sensor data collected in controlled environments and laboratory settings perform extremely poorly when utilized in uncontrolled environments and outside clinics \cite{7945018}.

For the machine learning algorithms to be accurate, one needs to collect and label sensor data in end-user settings. Therefore, active learning is a natural choice for labeling the data where the end-user\footnote{The terms `end-user' and `oracle' are interchangeably used in this paper.} acts as the oracle agent and we iteratively query the user for correct labels \cite{Settles,Bachman}. In such a human-centered setting, it is critical to design active learning strategies that are mindful of the user's physical and cognitive capabilities. We recognize that human beings are limited in their capacity to respond, for example, to prompts on their mobile devices. This capacity constraint is usually manifested in the number of queries that a person can respond to in a given time-frame and in the difference between the time that a query is made and when it has been responded to. This issue is critical in wearable-based continuous health monitoring where the amount of sampled data is orders of magnitude more than what the end-user can possibly annotate \cite{redmond2014does,ma2019cyclepro}. In this paper, we introduce the notion of {\it mindful active learning} and propose {\it EMMA} to maximize the active learning performance taking informativeness of sensor data, query budget, and human memory into account. To the best of our knowledge, our work is the first study that combines informativeness of queried data with oracle's memory strength in a unified framework for active learning.

\subsection{Related Work}
Active learning has been widely used for human-in-loop learning tasks to iteratively interact with an expert user to retrieve necessary information for improved learning performance \cite{Huang,Sabato,Zuobing,Jun,Hoi,Shen,Sener}. Prior research studied the problem of active learning under constrained query budget for image and video recognition \cite{Vijayanarasimhan}. Active learning has been also studied in the context of activity recognition \cite{Stikic,Diethe}. For instance, accelerometer data annotated by end-users are used to detect physical activities \cite{Bao}. Active learning has been used for many other human-centered prediction problems beyond those that use wearable sensors and images \cite{Settles,Murukannaiah}. However, prior research does not take into account important cognitive attributes of human beings such as memory strength and forgetfulness of the events while designing active learning solutions. Current research makes an implicit assumption that either the oracle has a perfect memory that can precisely remember all the events, or each query is instantaneously responded to. None of these assumptions is realistic in continuous health monitoring settings where the end-user also acts as our oracle. In this paper, we attempt to take the first steps at integrating physical and cognitive attributes of the oracle with active learning. In particular, we account for (i) the oracle's physical capability to respond to active learning queries as measured by the number of queries that are made; and (ii) the oracle's cognitive capacity to respond to the queries as measured by memory retention, a function that combines memory strength of the oracle agent with the amount of delay in responding to the issued queries.

\subsection{Contributions}
Our contributions in this paper can be summarized as follows: (i) we introduce mindful active learning as a general approach for budget-aware and delay-tolerant active learning in human-in-the-loop health monitoring applications. Mindful active learning takes into account the possibility of forgetting the event of interest by the oracle based on the time difference between querying and the activity being performed constrained on a maximum number of queries; (ii) we propose an approach to model memory retention based on the Ebbinghaus forgetting curve \cite{Ebbinghaus}; (iii) we formulate mindful active learning as an optimization problem and show that the problem is NP-hard; (iv) we propose a greedy heuristic algorithm to solve the optimization problem; and (v) we evaluate the performance of EMMA on three datasets for activity recognition where the goal is to detect physical activities performed by the end-user.

\section{Problem Statement}
Assume that we are given a collection of sensor measurements recorded while wearable sensors are carried by the user during daily activities. Without loss of generality, we assume that the sensor measurements, referred to as sensor observations henceforth, are represented in a feature space. These observations need to be used to train a classifier for activity recognition. Because the sensor observations are unlabeled, we use active learning to construct a labeled training set by querying the end-user to annotate a subset of the observations. Because of the lag between querying a sensor observation and performing the activity associated with that observation, it is possible that the oracle is incapable of remembering the correct label. We assume that the likelihood of such a mislabeling is a function of the oracle's memory strength and the time lag.  Moreover, we assume that the number of allowed queries is constrained to a given value, referred to as {\it budget}, in order to minimize burden of the oracle. Therefore, we need to select a subset of sensor observations, upper-bounded by the given budget, such that the probability of obtaining correct and informative labels is maximized. This problem can be defined as follows.

%We also assume that query time is the same as response time by the user. 
%Hassan: I commented this sentence becuase I thought it is confusing and perhaps unnecessary for this part of the paper.

\begin{problem}[Mindful Active Learning]
    Let $\mathcal{A}$ = \{$A_1$,$A_2$,..., $A_n$\} represent the set of activities that need to be recognized by the wearable system. We refer to this set as {\it activity vocabulary}. Furthermore, let $\mathcal{X}$ = \{$X_1$,$X_2$,..., $X_m$\} be a set of $m$ observations made by the sensors at times  $\mathcal{T}$ = \{$t_1$,$t_2$,..., $t_m$\}. The task of active learning is to query the user, with a predefined memory strength of $s$, at time $t_q$$\geq$$t_m$, to label sensor observations in $\mathcal{X}$ and train an activity recognition model using the labeled observations. The active learning process is constrained by limiting the number of queries to a given upper-bound budget, $B$.
\end{problem}

\subsection{Problem Formulation}
The task of selecting a subset of observations to be labeled by the oracle can be viewed as finding a set $\mathcal{Z}$ = \{$z_1$,$z_2$,..., $z_k$\} of $k$ observations in $\mathcal{X}$ (i.e., $\mathcal{Z}$ $\subseteq$ $\mathcal{X}$ and $k$$\leq$$B$) such that the misclassification error due to a model trained over $\mathcal{Z}$ is minimized. The goal is to select the best subset of unlabeled observations to form a candidate query set ($Z$) that results in an accurate classifier. In order to maximize the performance of the final classifier, we consider two criteria including informativeness of the candidate observation and the oracle's ability to remember the correct label at time $t_q$. We use $I_i$ and $M_i$ to refer to informativeness and memory measures for a given observation $X_i$ captured at time $t_i$.
  
This optimization problem can be formulated as 

\begin{alignat}{7}
          \label{eq1}
          \text{Maximize}        & \quad \sum_{i=1}^{m} a_i  \mathcal{E}(I_i, M_i)
\end{alignat}

Subject to:

\begin{alignat}{7}
          \label{eq2}
          &  \quad  X_i \in \mathcal{X} \\
          \label{eq3}
                      &  \sum_{i} a_i \leq B \\
          \label{eq4}
                      &  \quad a_i \in \{0,1\} 
\end{alignat}

\noindent where $X_i$ denotes the senor observation captured at time $t_i$, and $B$ represents the budget. The binary variable $a_i$ determines whether or not observation $X_i$ is selected for inclusion in $\mathcal{Z}$. The objective function in \eqref{eq1} attempts to maximize the total amount of expected gain ($\mathcal{E}$) given informativeness and memory for individual observations.

We propose to use uncertainty of the model with respect to a given observation as a measure of informativeness of that observation for inclusion in $\mathcal{Z}$. We also propose to measure memory by combining memory strength of the user with the expected error due to difference in time between query issuance and activity occurrence. The expected gain is then computed as

\begin{equation}\label{eq5}
\mathcal{E}(I_i, M_i)=\mathcal{E}(I_i)\mathcal{E}( M_i)
\end{equation}

To quantify informativeness ($I_i$) of observation $X_i$, we propose to use entropy, as shown in \eqref{eq6}, to measure how certain the model is about its predicted label for $X_i$.

\begin{equation}\label{eq6}
\mathcal{E}(I_i) = E_i = {-\sum_{j=1}^{n} P_{ij} \log P_{ij} }
\end{equation}

The term $E_i$ in \eqref{eq6} refers to entropy for observation $X_i$, and $P_{ij}$ represents the probability of $X_i$ being classified as $A_j$. Because the classifier is less certain to classify observations that carry a higher entropy, such observations will naturally be more informative if labeled and used for classifier retraining. Therefore, $E_i$ is a reasonable proxy for $\mathcal{E}(I_i)$. To quantify memory, $M_i$, for observation $X_i$, we define memory retention as follows. 

\begin{definition}[Memory Retention]
Memory retention, $R$, is defined as the probability of a human subject with a memory strength of $s$ being able to remember an event correctly after a given time, $t$, has elapsed.
\end{definition}

We use Ebbinghaus forgetting curve \cite{Ebbinghaus} to quantify memory retention. To this end, memory retention for observation $X_i$ is given by

\begin{equation}\label{eq7}
\mathcal{E}(M_i) = R_i = e^{-\Delta{t}_i/s} 
\end{equation}

\noindent where $\Delta{t}_i$ denotes the difference in time between occurrence of the event represented by observation $X_i$ and issuance of the query (i.e., $t_q$). Furthermore, the term $s$ represents memory strength, which is a user-specific value. The memory retention in \eqref{eq7} is naturally a measure for receiving correct labels from the oracle. Therefore, the memory retention $R_i$ can be used to quantify the memory for observation $X_i$ (i.e., $\mathcal{E}( M_i)=R_i$).

\begin{problem}[Entropy-Memory Maximization]\label{problem:emma}
By replacing $\mathcal{E}(I_i)$ and $\mathcal{E}(M_i)$ in \eqref{eq5} with entropy measure in \eqref{eq6} and memory retention in \eqref{eq7}, we can rewrite the objective function in \eqref{eq1} as follows.

\begin{alignat}{7}
\label{eq8}
\text{Maximize}        & \quad \sum_{i=1}^{m} \sum_{j=1}^{n} a_i e^{-\Delta{t}_i/s} (-P_{ij} \log P_{ij})
\end{alignat}

We note that this new objective function in \eqref{eq8} is subject to the same constraint in \eqref{eq2}--\eqref{eq4}. We refer to this formulation of mindful active learning as Entropy-Memory Maximization (EMMA).
\end{problem}

\subsection{Problem Complexity}\label{sec:complexity}
In this section, we discuss complexity of the entropy-memory maximization. We first show that the problem is NP-hard.

\begin{theorem}
	The entropy-memory maximization in \probref{emma} is NP-Hard.
\end{theorem}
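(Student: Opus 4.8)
\noindent\emph{Proof strategy.} The plan is to give a polynomial-time many-one reduction from the decision version of the 0--1 \textsc{Knapsack} problem, which is NP-complete, to the decision version of \probref{emma}. Starting from a \textsc{Knapsack} instance with $N$ items of positive values $v_1,\dots,v_N$, positive integer costs $c_1,\dots,c_N$, capacity $W$, and value target $V$, I would build an EMMA instance with $N$ candidate observations, a two-element activity vocabulary ($n=2$), memory strength $s=1$, and a fixed query time $t_q$, and then argue that there is a query set of total expected gain at least $V$ if and only if the \textsc{Knapsack} instance is a yes-instance.

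The first step is to realize the value of item $i$ as the per-observation gain $\mathcal{E}(I_i,M_i)=e^{-\Delta t_i/s}E_i$ of observation $X_i$. With $n=2$ and predicted probabilities $P_{i1}=p_i$, $P_{i2}=1-p_i$, the entropy $E_i$ ranges continuously over $(0,\log 2]$ as $p_i$ varies in $(0,\tfrac12]$, while the retention factor $e^{-\Delta t_i/s}$ ranges over $(0,1]$ as the timestamp $t_i$ varies below $t_q$; after rescaling all \textsc{Knapsack} values into $(0,\log 2]$ I can thus choose $p_i$ and $t_i$ so that the gain of $X_i$ matches the rescaled $v_i$. The one technical nuisance is that $\log$ is transcendental, so exact equality on rationals has to be arranged by the usual device of working at a common, gap-preserving precision (or by picking $p_i$ and $\Delta t_i$ jointly to land on representable numbers); this is routine bookkeeping for numeric hardness proofs and not where the difficulty lies.

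The part I expect to be the real obstacle is the budget constraint. The EMMA objective $\sum_i a_i e^{-\Delta t_i/s}E_i$ is separable and nonnegative, so under a pure cardinality budget $\sum_i a_i\le B$ an optimal query set is just the $B$ observations of largest gain, obtainable by sorting --- that version is polynomial-time solvable, so the hardness must come from queries consuming \emph{unequal} shares of the oracle's response capacity, consistent with the stated goal of minimizing the oracle's burden. Reading the budget in this weighted sense, $\sum_i c_i a_i\le B$, I would set the per-observation cost $c_i$ equal to the \textsc{Knapsack} cost of item $i$ and take $B=W$; then the feasible query sets coincide exactly with the feasible knapsack packings and the two objective values agree term by term, so the equivalence of the instances is immediate. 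Pinning down this weighted-budget formulation (and, if one insists on the literal unit-cost statement, supplying a bundling gadget or an explicit non-separable redundancy term) is the crux; once that is in place, polynomiality of the reduction together with NP-completeness of \textsc{Knapsack} yields that \probref{emma} is NP-hard.
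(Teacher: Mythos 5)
Your reduction, as you yourself concede, does not prove the theorem that is actually stated. The constraint \eqref{eq3} in \probref{emma} is a pure cardinality bound $\sum_i a_i \leq B$ with unit cost per query; no per-observation weight appears anywhere in the formulation. Your own observation is the decisive one: with fixed gains $e^{-\Delta t_i/s}E_i$ and a cardinality budget, the optimum is obtained by sorting, so the static problem as written is polynomial-time solvable. Consequently, mapping the Knapsack costs $c_i$ onto a weighted budget $\sum_i c_i a_i \leq W$ does not reduce Knapsack \emph{to} \probref{emma}; it replaces \probref{emma} by a different (weighted-budget) problem and proves that one hard. The ``bundling gadget or explicit non-separable redundancy term'' you defer is exactly the missing idea, and it is not routine bookkeeping: any such gadget must make the gain of an observation depend on which other observations are selected, and the objective \eqref{eq8} with fixed $P_{ij}$ and fixed $\Delta t_i$ has no mechanism to express that coupling.

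The paper's own proof is a one-line appeal to a Knapsack reduction with all details omitted, so you have followed the same nominal route; but spelled out honestly, as you have done, that route shows the hardness cannot reside in the static formulation at all. If the problem is hard, the difficulty must come from the adaptive, sequential character of the process that the paper's brute-force discussion implicitly assumes --- the probabilities $P_{ij}$ are recomputed after each retraining and the effective delay depends on where in the ordering a query is issued --- and a correct proof would need to formalize and exploit that dependence (or else the theorem would need to be restated for a weighted-budget or otherwise enriched model). As it stands, your argument establishes NP-hardness of a modified problem and leaves the stated theorem unproven.
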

\begin{proof}
Proof by reduction from the well-known Knapsack problem \cite{pisinger2005hard}. Details are eliminated for brevity.
\end{proof}

To provide some insight into the input size of the EMMA problem, consider a naive approach (i.e., brute-force), in which we cycle through all subsets of the unlabeled dataset with the number of elements less than or equal to $B$. For each subset, we can generate all possible orderings of the sensor observations within that subset. For each ordering, we draw one observation from the subset at a time, compute entropy and memory retention values, query the oracle, and train a classifier with the observations labeled so far. By repeating this process for all subsets and all orderings of the observations within each subset, we can finally choose the ordering and subset that achieve the maximum value for the objective function in \eqref{eq8}. Although this brute-force approach is impractical for real-world deployment, an analysis of the complexity of such an approach provides insight into the relationship between the complexity function and various problem parameters.

\begin{theorem}
	The time complexity of the brute-force approach for EMMA is exponential in the query budget, $B$.
\end{theorem}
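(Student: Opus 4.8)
The plan is to tally the cost of each stage of the brute-force procedure described above and observe that its combinatorial enumeration alone already forces an exponential (in fact super-exponential) dependence on $B$, while every other stage contributes only a polynomial factor. I would split the work into three parts: (i) enumerating all candidate query sets $\mathcal{Z}\subseteq\mathcal{X}$ with $|\mathcal{Z}|\le B$; (ii) for each such $\mathcal{Z}$, enumerating all orderings in which its observations could be drawn and presented to the oracle; and (iii) the per-ordering work of incrementally computing the entropy terms $E_i$ of \eqref{eq6}, the memory-retention terms $R_i$ of \eqref{eq7}, querying the oracle, retraining the classifier on the observations labeled so far, and evaluating the objective \eqref{eq8}. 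The theorem then reduces to bounding (i)--(ii) from below by a function exponential in $B$ and bounding (iii) from above by a polynomial in $m$ and $n$.

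For (i), the number of subsets of $\mathcal{X}$ of size at most $B$ is $\sum_{k=0}^{B}\binom{m}{k}$. For (ii), a fixed subset of size $k$ has $k!$ distinct orderings, so the number of (subset, ordering) configurations the algorithm must examine is $\sum_{k=0}^{B}\binom{m}{k}\,k!=\sum_{k=0}^{B}\frac{m!}{(m-k)!}$. For (iii), along a single ordering of length $k\le B$ the algorithm runs at most $B$ incremental rounds; each round evaluates one entropy value over the $n$-element activity vocabulary in $O(n)$ time, one memory-retention value in $O(1)$ time, and one classifier retraining on at most $B$ labeled samples, which for any standard base learner costs at most some polynomial $\tau(m,n)$. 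Hence the work per configuration is $O\!\big(B\,(n+\tau(m,n))\big)$, polynomial in the input size.

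Multiplying, the total running time of the brute-force approach is
\begin{equation}\label{eq:bf-cost}
O\!\left(B\,(n+\tau(m,n))\sum_{k=0}^{B}\frac{m!}{(m-k)!}\right),
\end{equation}
and it remains to show the sum in \eqref{eq:bf-cost} is exponential in $B$. Keeping only the $k=B$ term yields the lower bound $\frac{m!}{(m-B)!}=m(m-1)\cdots(m-B+1)\ge B!$, valid since $m\ge B$ (the budget constraint is non-trivial only when it does not exceed the number of observations). By Stirling's approximation $B!=2^{\Theta(B\log B)}$, so the running time is $2^{\Omega(B\log B)}$ and hence exponential --- indeed super-exponential --- in the query budget, the factor $B!$ coming purely from the orderings of one size-$B$ subset. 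When $B\le m/2$ one can alternatively cite $\binom{m}{B}\ge (m/B)^B\ge 2^{B}$. Conversely $\sum_{k=0}^{B}\frac{m!}{(m-k)!}\le (B+1)\,m^{B}$, confirming that the dependence on $B$ is exponential and no worse.

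There is no genuine obstacle here --- the argument is essentially a counting exercise --- and the only step that needs care is confirming that the per-configuration work in (iii) is truly polynomial, so that it cannot itself be responsible for the blow-up; this is where the mild assumption of a polynomial-time base learner enters. It is worth stating explicitly what the theorem does \emph{not} assert: it bounds the cost of this particular brute-force procedure, not of every algorithm for EMMA, and the latter kind of unconditional lower bound would be a far stronger claim, merely consistent with (not implied by) the NP-hardness shown earlier.
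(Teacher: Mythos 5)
Your proposal takes essentially the same route as the paper: both count the brute-force configurations as $\sum_{b}\binom{m}{b}\,b!=\sum_{b}\frac{m!}{(m-b)!}$ and conclude the cost is on the order of $m^{B}$ (the paper via the approximation valid when $B\ll m$, you via the explicit bounds $B!\le\frac{m!}{(m-B)!}$ and $\sum_{k\le B}\frac{m!}{(m-k)!}\le(B+1)m^{B}$). Your additions---the polynomial per-configuration cost and the rigorous lower bound---simply tighten the same counting argument, so the proof is correct and consistent with the paper's.
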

\begin{proof}

It is straightforward to see that the overall time complexity of the brute-force solution is $O(|S|)$ where $|S|$ refers to the total number of orderings of all subsets of unlabeled set $\mathcal{X}$ with a subset size less than or equal to $B$. For each subset of size $b$, there exist $b!$ orderings of the observations that reside within the subset. Therefore,  $|S|$ is given by

\begin{equation}\label{eq12}
|S| = \sum_{b=1}^{B}{m\choose b } \times b!= \sum_{b=1}^{B}{\frac{m!}{(m-b)!}} 
\end{equation}

\noindent where $m$ denotes the size of the unlabeled set $\mathcal{X}$. The equation in \eqref{eq12} can be presented as

\begin{equation}\label{eq13}
|S| = \sum_{b=1}^{B} \prod_{k=1}^{b} (m-b+k)
\end{equation}

Because in real-world scenarios, the number of recorded sensor observations is orders of magnitude higher than the number of queries that the oracle can respond to (i.e., $B \ll m$), we can write

\begin{equation}\label{eq14}
\prod_{k=1}^{b} (m-b+k) \approx m^b
\end{equation}

Therefore, \eqref{eq13} can be rewritten as

\begin{equation}\label{eq15}
|S| \approx m+m^2+...+m^B \approx {\dfrac{m^{(B+1)}-1}{m-1}} \approx m^B
\end{equation}

Therefore, the time complexity of the naive approach is $O(m^B)$.
\end{proof}

\section{Problem Solution}\label{sec:solution}
Recall that the goal of the optimization problem in EMMA is to select a subset of observations of high-entropy such that their labels are likely of being remembered by the oracle. Because the problem is NP-hard in nature, we design a greedy heuristic algorithm, shown in Algorithm \ref{alg:1}, to solve the problem.

\begin{algorithm}[tbh!]
\caption{Greedy algorithm for EMMA.}
\label{alg:1}
\begin{algorithmic}
\STATE \textbf{Input:}  $\mathcal{X}$ (unlabeled observations), $B$ (budget)
\STATE \textbf{Output:} $\mathcal{Z}$ (labeled observations)
\FOR{$b$ = $1$ to $B$}
\STATE Compute $\mathcal{E}(I_i, M_i)$ for all $X_i \in \mathcal{X}$ using \eqref{eq5}--\eqref{eq7}
\STATE Find $X_i \in \mathcal{X}$ with highest value of $\mathcal{E}(I_i, M_i)$
\STATE Remove $X_i$ from $\mathcal{X}$
\STATE Query oracle to annotate $X_i$, and add labeled $X_i$ to $\mathcal{Z}$
\STATE Retrain model $\mathcal{M}$ using labeled items in $\mathcal{Z}$
\ENDFOR
\end{algorithmic}
\end{algorithm}

This simple greedy algorithm iteratively chooses the best candidate observation from the set of unlabeled observations $\mathcal{X}$ with the highest expected gain, shown in \eqref{eq5}, assuming that the informativeness and memory are measured as given in \eqref{eq7} and \eqref{eq8}. Note that after moving an observation from $\mathcal{X}$ to $\mathcal{Z}$, model $\mathcal{M}$ is retrained using the labeled data in $\mathcal{Z}$. This procedure is repeated until the entire budget is consumed.

\begin{theorem}
	The greedy approach shown in \algref{1} provides a $\frac{1}{2}$-approximation for EMMA.
\end{theorem}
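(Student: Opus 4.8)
The plan is to recast the EMMA objective as a normalized, monotone, submodular set function and then invoke the classical greedy guarantee for such functions. For a query set $\mathcal{Z}\subseteq\mathcal{X}$ define $f(\mathcal{Z})$ to be the expected gain $\sum_{X_i\in\mathcal{Z}}\mathcal{E}(I_i,M_i)=\sum_{X_i\in\mathcal{Z}}R_iE_i$ collected when the observations in $\mathcal{Z}$ are labeled, where, as in \algref{1}, the entropy $E_i$ of a still-unlabeled observation is evaluated against the classifier retrained on the labels gathered so far; note that $f$ is the deterministic expectation, so the stochasticity of the oracle's possibly-wrong answers is already absorbed into the $R_i$ factors, and $f(\mathcal{Z})$ is well defined along the greedy trajectory (for a general set one takes the best ordering, which only helps the optimum). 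I would first dispatch the easy structural facts: $f(\emptyset)=0$, and $f$ is monotone nondecreasing because each summand $R_iE_i$ is the product of a memory factor $R_i=e^{-\Delta t_i/s}\in(0,1]$ and an entropy $E_i\ge 0$, both nonnegative, so adjoining an observation never lowers the objective.

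The core of the argument is submodularity: for $\mathcal{Y}\subseteq\mathcal{Z}$ and $X\notin\mathcal{Z}$ one needs $f(\mathcal{Y}\cup\{X\})-f(\mathcal{Y})\ge f(\mathcal{Z}\cup\{X\})-f(\mathcal{Z})$. The memory factor of $X$ is intrinsic to the observation and independent of the committed subset, so the only subset-dependence enters through the entropy term, i.e.\ through how the retrained classifier's uncertainty about $X$ shrinks as its training pool is enlarged. I would argue --- or, following the standard treatment of adaptive active-learning objectives, assume as a mild regularity condition on the learner --- that the informativeness an observation still carries exhibits diminishing returns: additional labeled data cannot increase the residual uncertainty on observations that the larger pool already explains. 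If one prefers to avoid any learning-theoretic hypothesis, one may instead treat the per-observation gains $R_iE_i$ as fixed weights, in which case $f$ is modular, greedy is exactly optimal, and the $\tfrac12$ bound holds a fortiori; the submodular modeling is the honest general case, and it is also what makes the brute-force blow-up of the previous theorem meaningful.

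With a normalized monotone submodular $f$ and the budget constraint $|\mathcal{Z}|\le B$ --- a uniform matroid of rank $B$ --- the theorem follows from the Fisher--Nemhauser--Wolsey result: the rule that repeatedly adjoins the element of maximum marginal gain, which is precisely \algref{1}, returns a set $G$ with $f(G)\ge\tfrac12 f(O)$ for the optimum $O$. I would include the short self-contained version. Let $O=\{o_1,\dots,o_B\}$ and let $G_0\subset\cdots\subset G_B=G$ be the greedy chain; the greedy choice gives $f(G_t)-f(G_{t-1})\ge f(G_{t-1}\cup\{o_t\})-f(G_{t-1})$ for each $t$ (the right side being $0$ when $o_t$ is already in $G_{t-1}$). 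Telescoping $f(O\cup G)$ along the $o_t$'s and applying submodularity against the chain yields $f(O)\le f(O\cup G)\le f(G)+\sum_t\bigl(f(G_t)-f(G_{t-1})\bigr)=2f(G)$, hence $f(G)\ge f(O)/2$. For a uniform matroid this accounting in fact sharpens to the $1-1/e$ bound, but $\tfrac12$ is all that is claimed.

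The main obstacle is exactly the submodularity of the entropy component under per-iteration retraining: the memory term is completely benign and subset-independent, but the statement ``more training data does not increase residual uncertainty on already-covered observations'' is a property of the particular learner $\mathcal{M}$ rather than of an arbitrary classifier, so the proof must either impose a reasonable regularity assumption on $\mathcal{M}$ (or on the entropy-reduction curve) or fall back to the fixed-weight reading. Everything after submodularity --- monotonicity, normalization, and the $\tfrac12$ bound --- is the textbook greedy-on-a-matroid argument and requires no further work.
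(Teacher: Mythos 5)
The paper does not actually supply a proof of this theorem (it is ``eliminated for brevity''), so there is no argument of the authors' to compare yours against; what matters is whether your proposal stands on its own, and it does not quite. The load-bearing step --- submodularity of the objective when $E_i$ is the entropy of the \emph{retrained} classifier --- is exactly the step you leave open, and it is not a mild regularity condition: it is false for generic learners. Adding labeled observations can strictly \emph{increase} the predictive entropy on an unqueried point (most obviously when a new activity class first enters $\mathcal{Z}$ and the predicted class distribution on nearby points flattens, or when the SVM decision boundary shifts), so the ``diminishing returns of residual uncertainty'' property cannot simply be assumed away and then invoked in a theorem stated unconditionally. There is a second, related gap: in the adaptive setting the quantity you call $f(\mathcal{Z})$ is not a set function at all --- it depends on the order in which queries are issued and on the realized (possibly incorrect) oracle answers that the retrained model was fit to --- so the Fisher--Nemhauser--Wolsey machinery does not apply as stated; the appropriate framework would be adaptive submodularity, which again demands a structural property of $\mathcal{M}$ that is not established here. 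Your ``take the best ordering, which only helps the optimum'' aside does not repair this, because the greedy chain's values and the optimum's values are then measured under different label realizations.

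Your fallback reading --- freeze the weights $R_iE_i$ and note that $f$ is modular --- is internally consistent but proves the wrong thing: with fixed weights and a pure cardinality constraint the problem is solved exactly by sorting, so greedy is $1$-optimal and, more tellingly, the paper's own NP-hardness theorem (Knapsack reduction) would be vacuous. So the intended problem must be the adaptive one, and for that version your proof is conditional on an unproven (and generally untrue) hypothesis. To make the argument honest you would need either (i) an explicit assumption on the learner's entropy-reduction behavior (stated as a hypothesis of the theorem), (ii) an adaptive-submodularity argument with its conditions verified for this objective, or (iii) a different proof technique altogether; the telescoping $f(O)\le f(O\cup G)\le f(G)+\sum_t\bigl(f(G_t)-f(G_{t-1})\bigr)\le 2f(G)$ part is standard and fine, but it only fires once the missing structural property is in hand.
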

\begin{proof}
	Proof is eliminated for brevity.
\end{proof}

\begin{theorem}
	The time complexity of \algref{1} is linear in $m$, the number of original unlabeled observations in $\mathcal{X}$.
\end{theorem}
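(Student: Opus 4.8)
The plan is to analyze \algref{1} line by line and bound the cost of a single iteration of the outer loop, then multiply by $B$. The outer loop runs exactly $B$ times, and $B$ is a constant with respect to $m$ (indeed $B \ll m$), so it suffices to show that each iteration costs $O(m)$ up to factors that do not depend on $m$. Within one iteration the dominant work is the line that computes $\mathcal{E}(I_i, M_i)$ for every $X_i \in \mathcal{X}$: there are at most $m$ surviving observations, and for each one we evaluate \eqref{eq5}--\eqref{eq7}, which amounts to computing an entropy sum over the $n$ activity classes (cost $O(n)$), multiplying by the exponential memory factor $e^{-\Delta t_i/s}$ (cost $O(1)$), so the whole line is $O(mn)$. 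Since $n = |\mathcal{A}|$ is a fixed property of the task and not a function of the data-set size, this is $O(m)$ in $m$.

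Next I would account for the remaining lines of the loop body. Finding the observation with the highest expected gain is a single linear scan over at most $m$ precomputed values, hence $O(m)$; removing $X_i$ from $\mathcal{X}$ and appending the labeled copy to $\mathcal{Z}$ is $O(1)$ (or $O(m)$ worst case with an array representation, which does not change the bound); querying the oracle is a single external call, treated as $O(1)$. The one line that deserves a remark is retraining the model $\mathcal{M}$ on the labeled set $\mathcal{Z}$: its cost depends on the chosen classifier, but $|\mathcal{Z}| \le B$ is bounded by a constant, so retraining sees at most $B$ training points and its cost is independent of $m$ — it contributes only an additive constant (in $m$) per iteration. Summing the per-iteration costs gives $O(m)$ per iteration, and multiplying by the $B$ iterations of the outer loop yields total time $O(Bm) = O(m)$ since $B$ is constant, which is the claimed linearity in $m$.

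The main obstacle — really the only subtlety — is being explicit about what is held fixed: the bound is linear in $m$ only if we treat the number of classes $n$, the budget $B$, and the per-example classifier cost as constants (or at least as quantities not growing with $m$). I would state this assumption up front, note that it is exactly the regime described in \secref{complexity} ($B \ll m$, and $\mathcal{A}$ fixed by the application), and then the line-by-line accounting above is routine. A secondary point worth a sentence is that recomputing $\mathcal{E}(I_i, M_i)$ from scratch in every iteration is what keeps the analysis simple; even though the entropies change after each retraining, we never pay more than $O(mn)$ to refresh all of them, so the naive recomputation does not break the linear bound. Contrasting this $O(m)$ cost with the $O(m^B)$ cost of the brute-force approach established in the previous theorem underscores why the greedy heuristic is the practical choice.
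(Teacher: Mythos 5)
Your proposal is correct and follows essentially the same route as the paper's proof: the loop runs $B$ times, each iteration is dominated by the $O(m)$ cost of recomputing $\mathcal{E}(I_i, M_i)$ over the remaining observations plus a linear scan for the maximum, giving $O(m \cdot B)$ overall, i.e., linear in $m$. Your explicit treatment of the class count $n$, and of the retraining step (whose cost depends only on $|\mathcal{Z}| \le B$, not on $m$), is slightly more careful than the paper, which simply assumes constant time per $\mathcal{E}(I_i, M_i)$ evaluation and does not mention retraining, but the underlying argument is the same.
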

\begin{proof}
	The `for' loop iterates for `B' times. During each iteration, we need to (i) compute $\mathcal{E}(I_i, M_i)$ for the remaining elements in $\mathcal{X}$; and (ii) find $X_i \in \mathcal{X}$ with highest value of $\mathcal{E}(I_i, M_i)$. Both of these operations require $O(|\mathcal{X}|)$ to complete assuming a constant time complexity for computing $\mathcal{E}(I_i, M_i)$ for a given $X_i$. Therefore, the time complexity of \algref{1} is $B \times O(|\mathcal{X}|)$. Because $|\mathcal{X}|$ is initially $m$ and decreases by one in each iteration, the total number of times to compute $\mathcal{E}(I_i, M_i)$ is given by
	
	\begin{equation}\label{eq18}
	\sum_{b=1}^{B} (m-b-1) = O(m.B)
	\end{equation}
	Therefore, the time complexity of \algref{1} is $O(m.B)$
\end{proof}

\section{Validation}
This section presents our validation approach, results on the performance of our mindful active learning algorithms, and a comparative analysis. Throughout this section, we use {\it EMMA} to refer to the greedy algorithm presented in \secref{solution}. We note that the naive approach discussed in \secref{complexity} has an exponential time complexity, which makes it impractical given the large amounts of data collected during continuous health monitoring using wearable sensors. 

\subsection{Experimental Setup}\label{sec:setup}
To assess the performance of EMMA, we used three real-world, publicly-available, sensor-based datasets including  HART \cite{Anguita}, DAS \cite{Altun1,Barshan,Altun2}, and AReM \cite{Palumbo}. HART contained sensor data collected with $30$ human subjects during six activities. For DAS, data from eight human subjects and $10$ activity classes were used. The AReM dataset involved six activities and one human subject.

We also simulated oracles with various memory strengths. The goal was to measure the likelihood of an incorrect label provided by the oracle given the memory strength and the time that has passed from capturing the queried observation. To simulate the oracle's remembering of the event associated with a queried sensor observation $X_i$ based on a given memory strength value $s$, we first computed memory retention $R_i$ for $X_i$ using \eqref{eq7}. We then assigned the correct label with the probability $R$ and incorrect label with the probability $(1-R)$. To alleviate the effect of randomness in our simulation, we repeated each experiment $30$ times and report the average results.

For comparative analysis, we assessed the performance of EMMA against the following active learning approaches:

\begin{itemize}
	\item {\bf EMA} (Entropy Maximization): this approach solves the optimization problem based on informativeness of the queried observations only, while still considering oracle's memory deficiency effects in responding to the queries. In this case, $\mathcal{E}(I_i)$=$E_i$ and $\mathcal{E}(M_i)=1$. 
	
	\item {\bf MMA} (Memory Maximization): this algorithm aims to maximize the objective function for memory retention only. This means, the algorithm tends to choose observations whose probability of remembering the label is higher than others. In this case, $\mathcal{E}(I_i)$=$1$ and $\mathcal{E}(M_i)$=$R_i$. 
	
	\item {\bf Upper-bound (UB)}: this refers to the case where no erroneous labels exist as a result of memory deficiency. That is, we assume that the oracle's memory is perfect, as a result of which the optimization problem aims to maximize for entropy only.
	
	\item {\bf Lower-bound (LB)}: this refers to the case where the oracle's memory is low and the observations are chosen randomly. Therefore, the informativeness of the queried observation is not considered as a parameter.
\end{itemize}
 
Throughout our experiments, we used SVM (Support Vector Machine) as our underlying activity recognition classifier. This choice was made based on extensive experiments on the datasets, where we considered four types of machine learning algorithms including SVM with linear kernel, SVM with RBF (Radial Basis Function) kernel, logistic regression, and decision tree. These algorithms achieved $92$\%, $83$\%, $90$\%, $88$\% average accuracy, respectively, over all subjects. Note that the presented methodologies are independent of choice of the activity recognition classifier. We also assumed that a small number of labeled samples are initially available in $\mathcal{Z}$. In our experiments, $\mathcal{Z}$  initially had two randomly selected labeled samples. This allowed us to learn an initial model to calculate primary entropy values for active learning.

\subsection{Performance of EMMA}
As a first analysis, we evaluated the performance of EMMA and examined how its accuracy changes as various parameters of the algorithm change. To this end, we conducted multiple experiments by changing algorithm parameters including query budget, $B$, and memory strength, $s$. After EMMA constructed a labeled dataset, we trained an activity recognition classifier using the labeled data.

\figref{fig2} shows the accuracy of the learned classifier on the three datasets as the query budget ranges from $5$ to $200$. The graphs show the accuracy values for five different memory retention levels including $R_1$ ($10$\%--$99$\%), $R_2$ ($20$\%--$99$\%), $R_3$ ($30$\%--$99$\%), $R_4$ ($50$\%--$99$\%), and $R_5$ ($70$\%--$99$\%). Recall that the memory retention $R$ refers to the probability of remembering label for a given sensor observation in the dataset. However, memory retention is also a function of both memory strength and time different between query issuance and labeling (i.e., lag or delay). We also note that sensor observations have different time stamps and therefore represent different time delays. Therefore, each memory retention value (i.e., $R_1$ to $R_5$) in \figref{fig2} us represented as a range rather than a fixed value. Note that, as the time window size to capture each sensor observation from raw sensor readings varies across the datasets, we used different memory strength values to obtain the same memory retention intervals/ranges to provide comparable results.

\begin{figure}[t]
	\subfloat[HART \label{fig:HART}]{%
		\includegraphics[ width=0.33\linewidth]{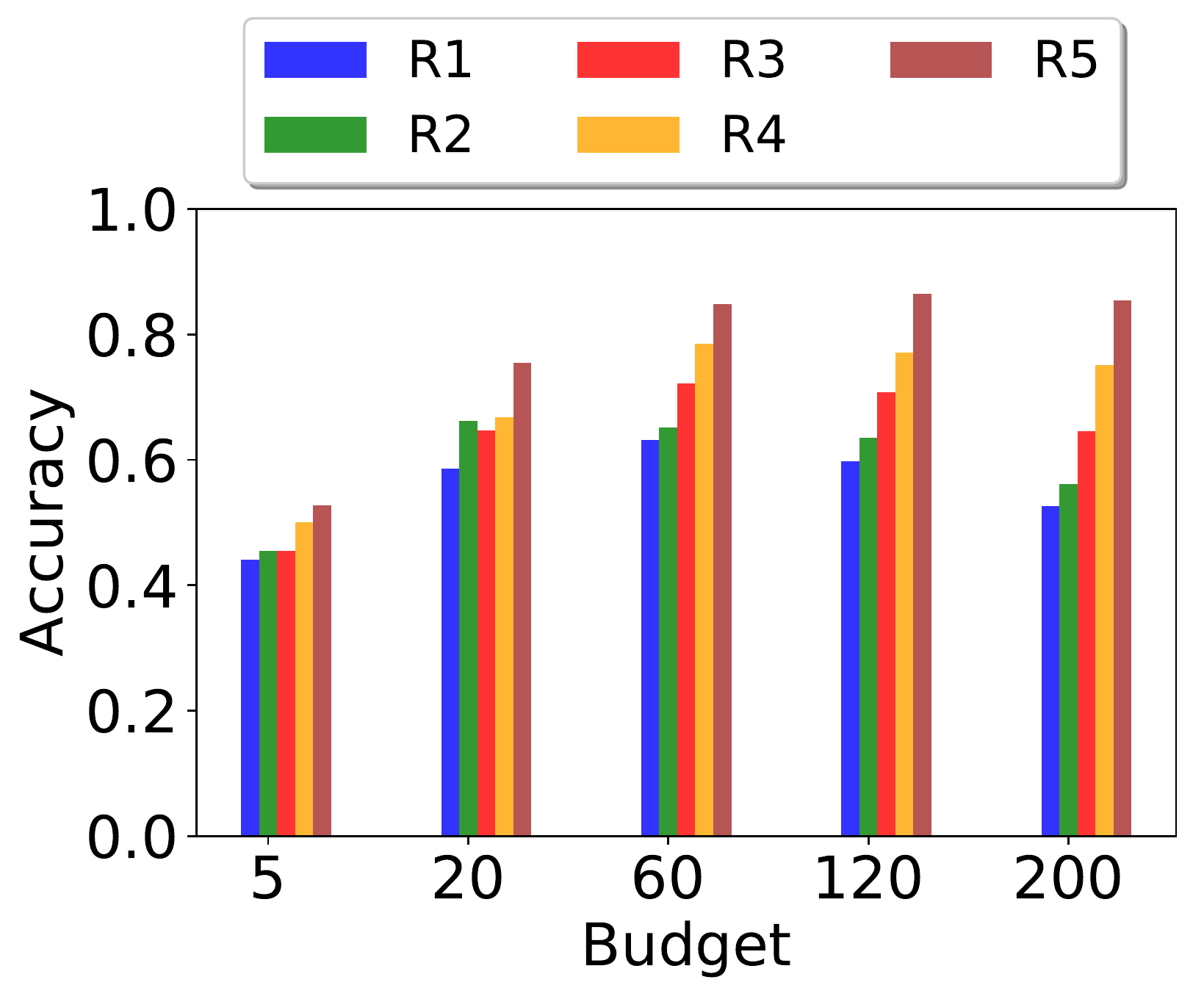}}
	\hspace*{-0.3em}
	\subfloat[DAS \label{fig:DAS} ]{%
		\includegraphics[ width=0.33\linewidth]{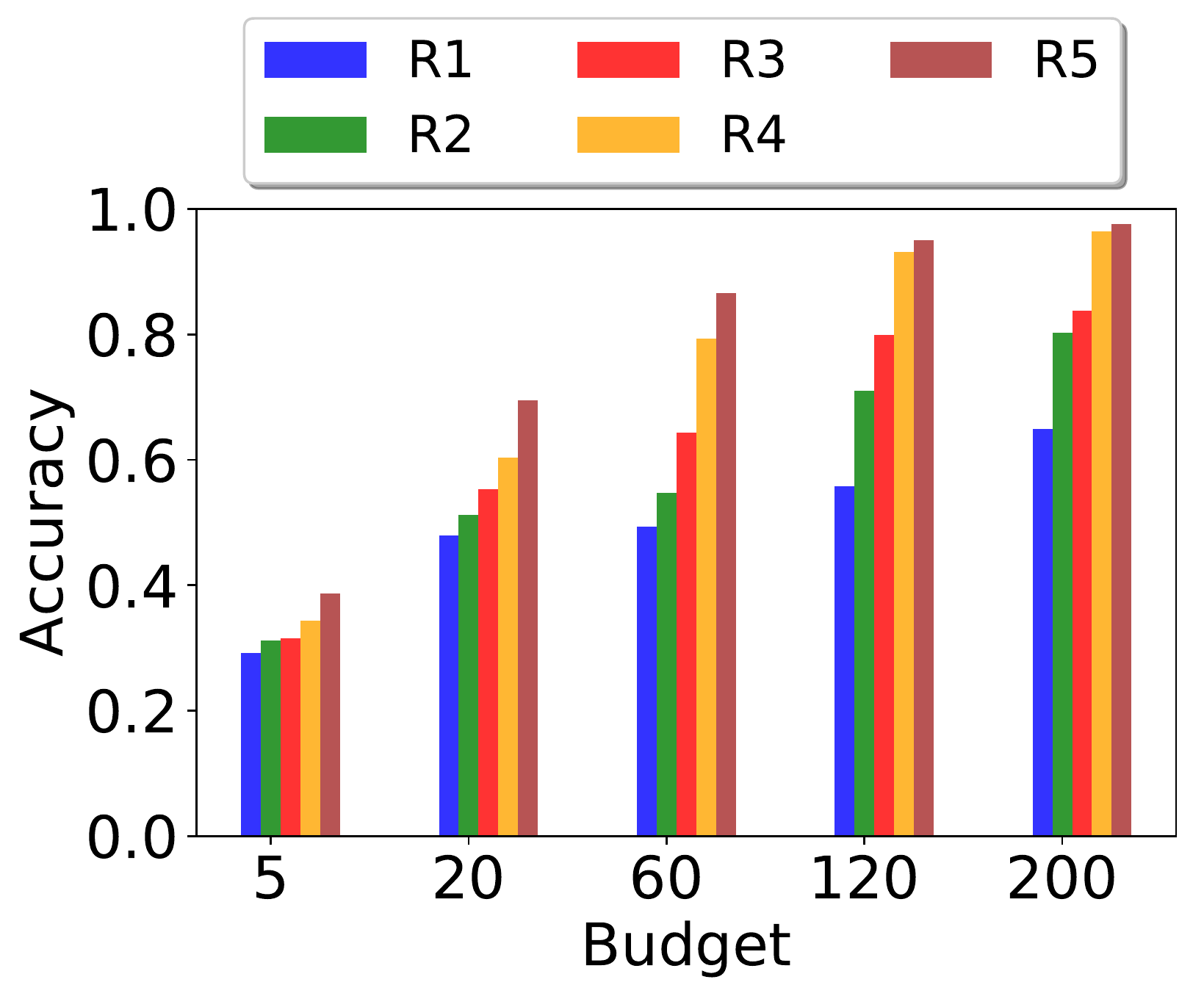}}
	\hspace*{-0.3em}
	\subfloat[AReM \label{fig:AReM}]{%
		\includegraphics[ width=0.33\linewidth]{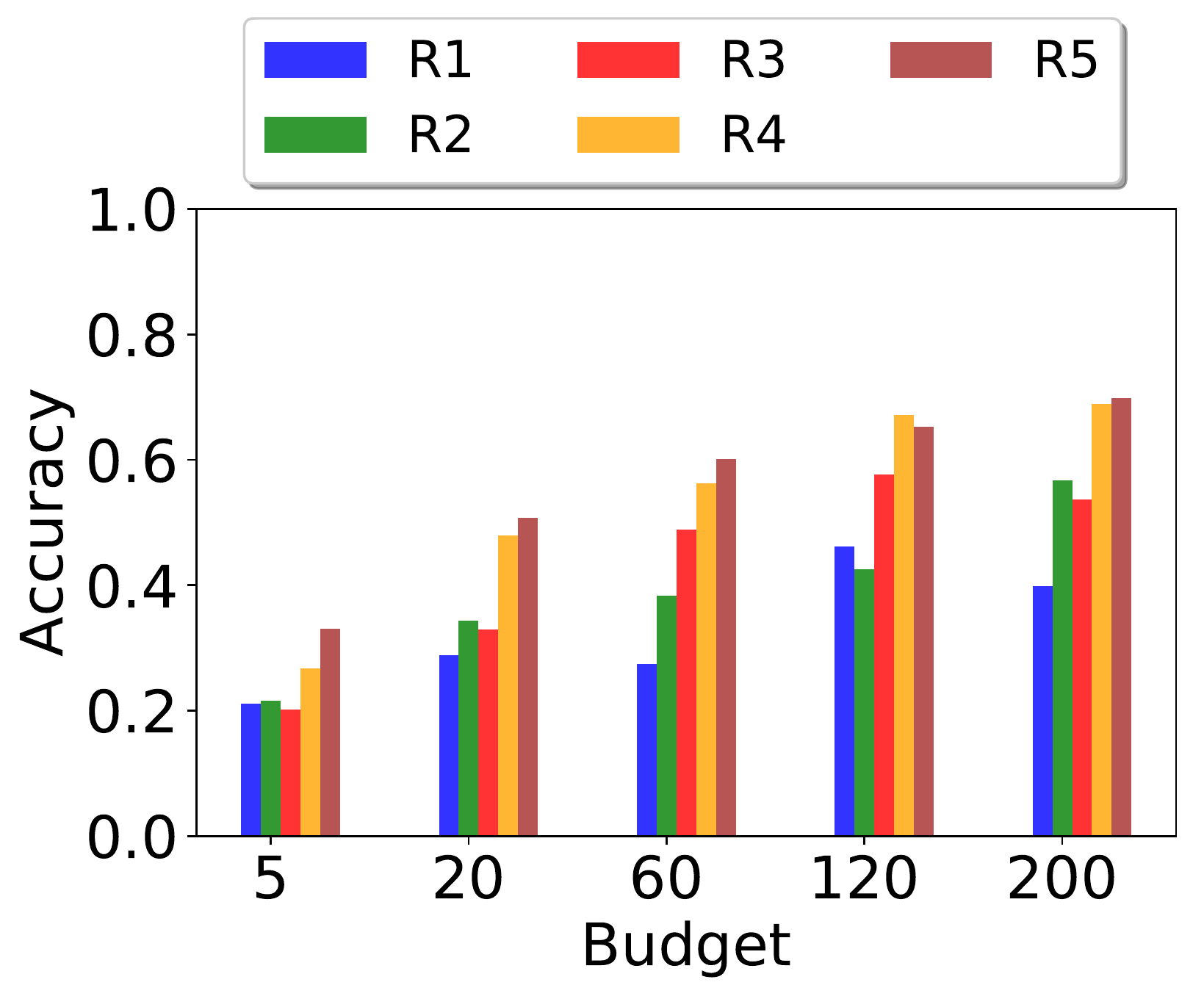}}
	\vspace{-2mm}
	\caption{\label{fig:fig2} Accuracy of activity recognition model learned using EMMA on three datasets (HART, DAS, and AReM) as a function of query budget for five different memory retention levels ($R_1$--$R_5$).}
	\vspace{-4mm}
\end{figure}

As \figref{fig2} suggests, the activity recognition accuracy improves for each memory retention value as the query allowance (i.e., budget) increases. Also, as the memory strength grows, the accuracy improves for each budget value. The minimum accuracy, achieved with least budget and weakest memory is $44\%$, $29\%$ and $21\%$ for HART, DAS and AReM datasets, respectively. The accuracy reaches its maximum values of $85.3\%$, $97.5\%$ and $70\%$ with greatest budget and strongest memory for the three datasets. An interesting observation from \figref{fig2} is the performance decline in weak memory cases. It is generally known that increasing the number of labeled observations using active learning improves the classifier performance. However, we can see that adding a large number of labeled observations may reduce the accuracy if the memory retention is low due to a weak memory. An example of such cases is shown in \figref{HART} for retention levels $R_1$, $R_2$, and $R_3$. In all these cases, the activity recognition performance improves up to a certain point (e.g., $B$=$60$) and it starts dropping after that point, mainly because more incorrect labels are added to the dataset resulting in a less accurate model. For instance, the accuracy for $R_1$, as the weakest memory, starts from $44\%$ for $B$=$5$,  reaches up to $63\%$ for $B$=$60$, and again drops to $52.5\%$ for $B$=$200$. Another interesting observation is that the accuracy saturates after certain point with strong memories. This can be observed, for example, in \figref{HART} for memory retention levels $R_4$ and $R_5$ where the accuracy plateaus after acquiring $60$ labeled observations and stays around $77\%$ and $85\%$ for higher budget values. The reason is that, in this dataset, each added observation is very informative and the classifier learns faster and needs fewer observations to achieve its maximum capability. 

The difference in accuracy among different datasets in \figref{fig2} can be explained by the quality of the data and the difficulty of the recognition task. For example, the DAS dataset (i.e., \figref{DAS}) contains highly discriminative features over different activities. As a result, adding a few number of correctly labeled observations has a significant impact on improving the classifier performance. For instances, when considering $R_5$, the change in accuracy when $B$ grows form $5$ to $20$ is $23\%$ for HART and $30\%$ for DAS. On the contrary, AReM dataset (i.e., \figref{AReM}) contains less discriminative features and the window size over which the features are calculated is small, leading to needing a lot more labeled observations to learn new activities. Therefore, the growth in the performance of the classifier in \figref{AReM} is slow compared to other datasets. Specifically, the amount of improvement in accuracy is $17\%$ for $R_5$ when $B$ increases form $5$ to $20$ in AReM dataset, while this improvement is $23\%$ and $30\%$ for HART and DAS datasets, respectively.

\begin{figure}[t!]
	\subfloat[$R_1$ \label{$R_1$}]{%
		\includegraphics[ width=0.33\linewidth]{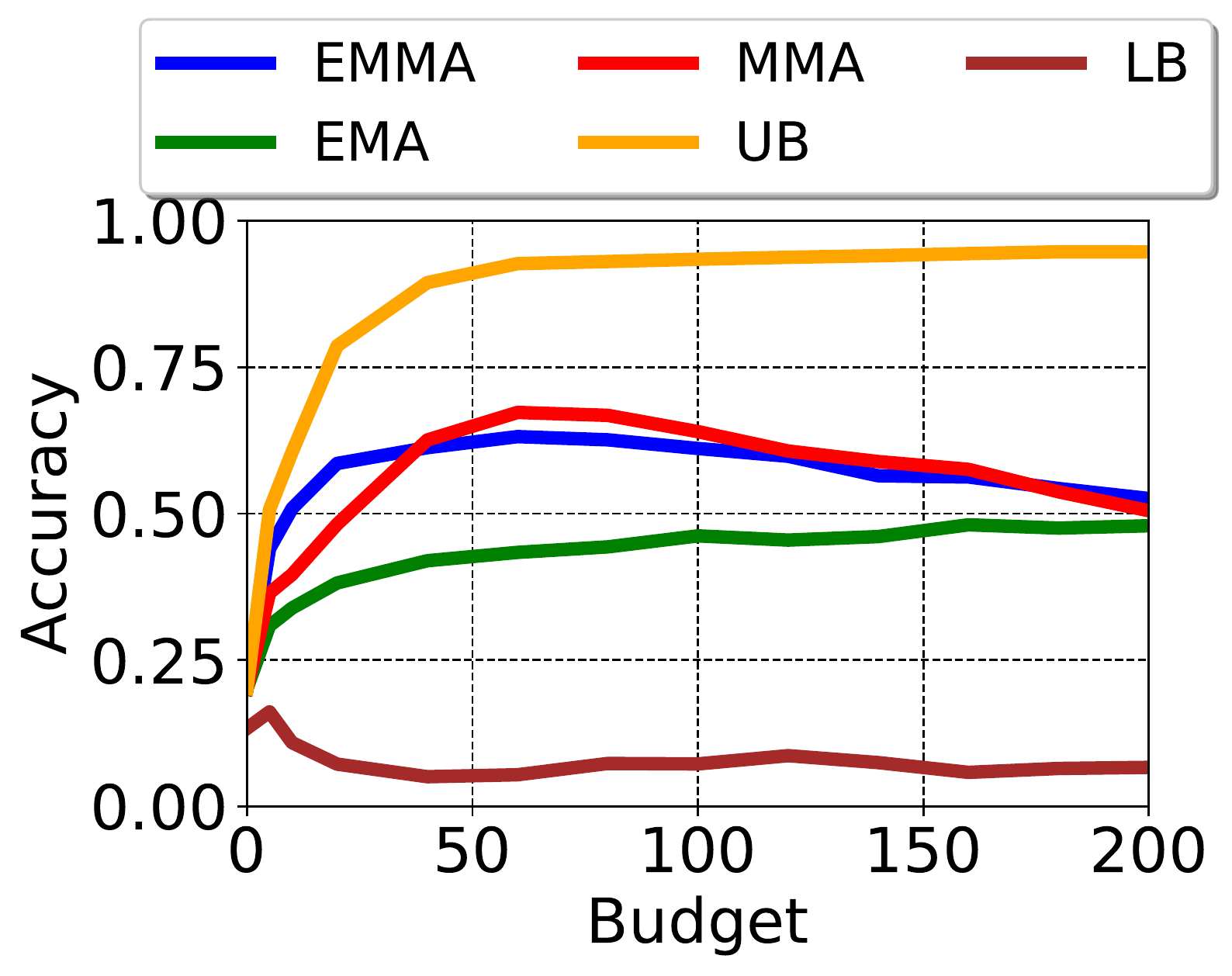}}
	\hspace*{-0.3em}
	\subfloat[$R_2$ \label{$R_2$} ]{%
		\includegraphics[ width=0.33\linewidth]{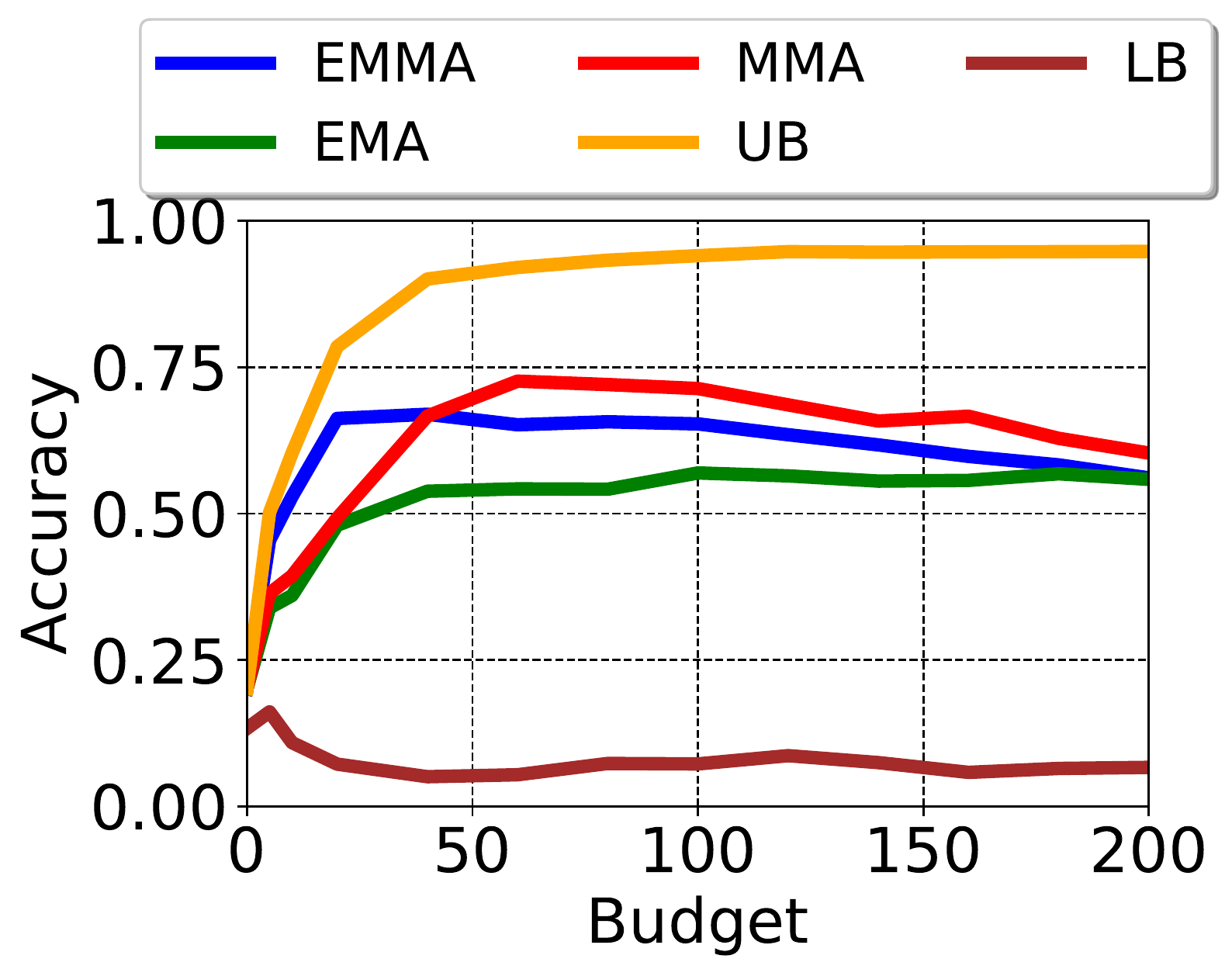}}
	\hspace*{-0.3em}
	\subfloat[$R_3$ \label{$R_3$}]{%
		\includegraphics[ width=0.33\linewidth]{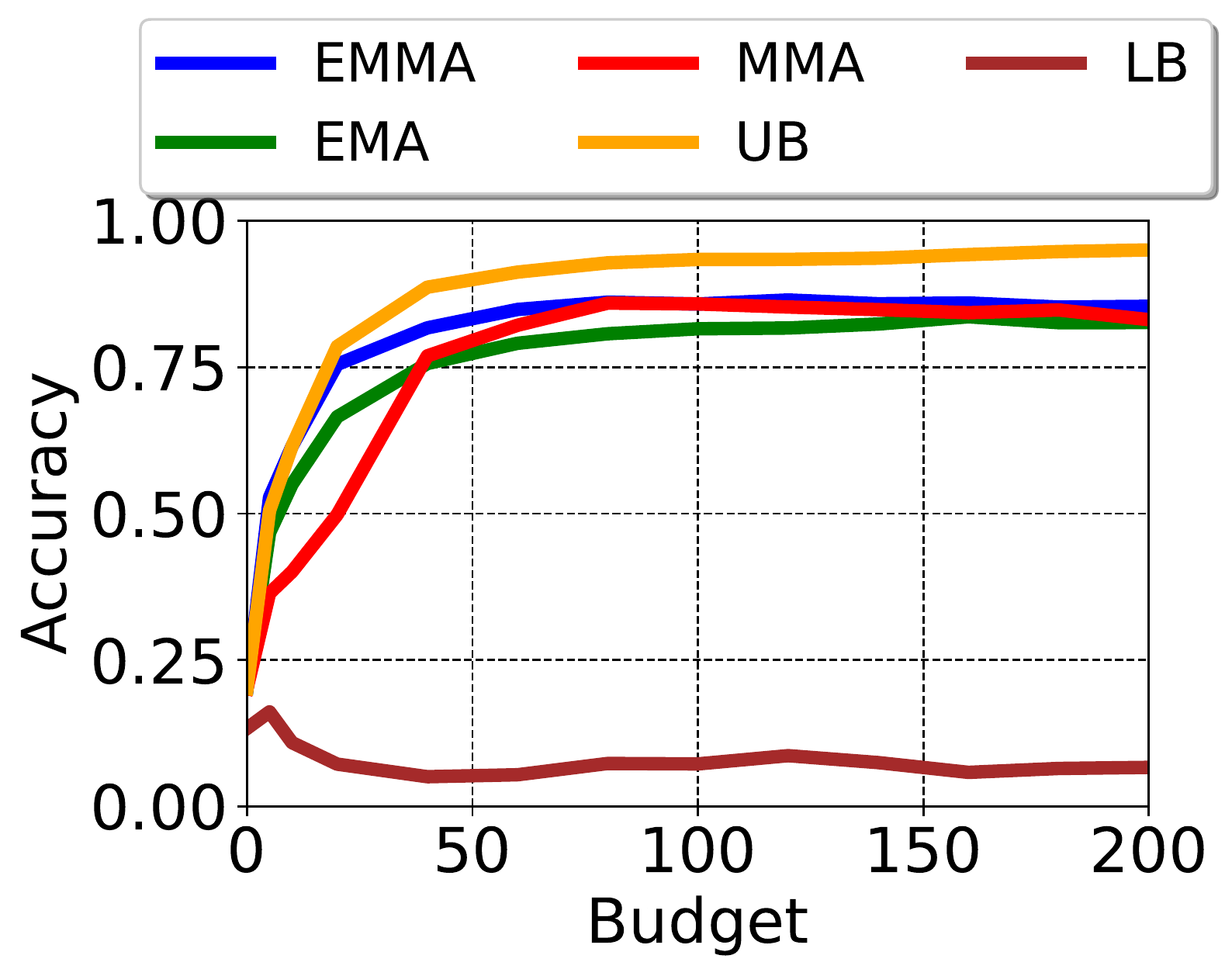}}
	\vspace{-2mm}
	\caption{\label{fig:fig3} Activity recognition accuracy of various active learning algorithms on HART dataset using three different memory retention levels, $R_1$ (low), $R_2$ (medium), and $R_3$ (high).}
	\vspace{-4mm}
\end{figure}

\subsection{Comparative Analysis}
We compared the performance of EMMA with that of several active learning approaches including EMA, MMA, upper-bound (UB), and lower-bound (LB). See \secref{setup} for a description of these alternative approaches. For brevity, we focus on accuracy as our main performance measure here. Similar to our analysis in the previous section, we examined the performance of each algorithm while the budget value ranged from $0$ to $200$ and the memory strength was set such that different memory retention levels, $R_1$ ($10$\%--$99$\%), $R_2$ ($25$\%--$99$\%), and $R_3$ ($70$\%--$99$\%) were obtained. The results of this comparative analysis are shown in \figref{fig3}, \figref{fig4}, and \figref{fig5} for HART, DAS, and AReM datasets, respectively. The accuracy numbers in these figures represent average values computed over all users in each dataset.

\begin{figure}[t!]
   \subfloat[$R_1$ \label{$R_1$}]{%
      \includegraphics[ width=0.33\linewidth]{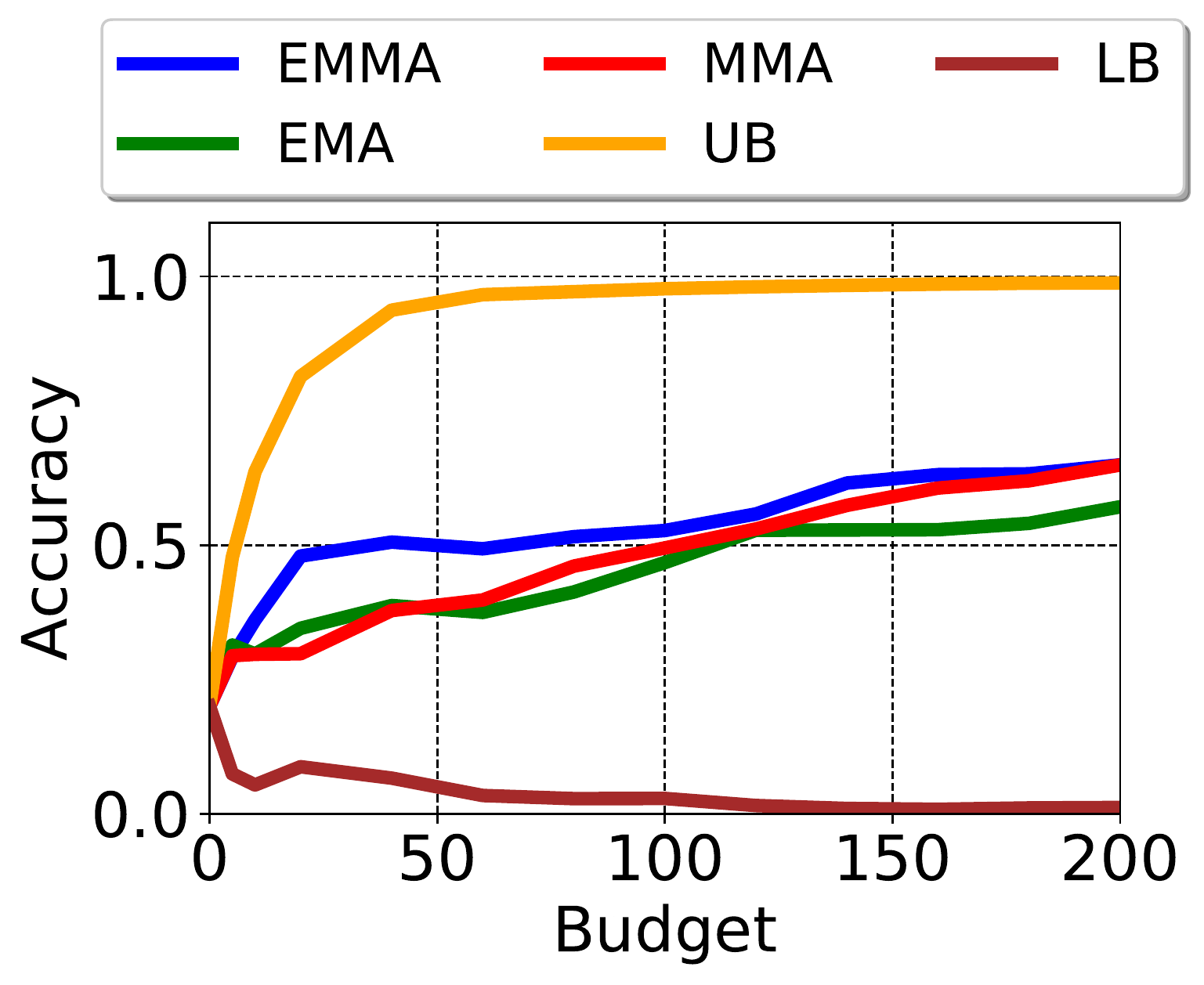}}
\hspace*{-0.3em}
   \subfloat[$R_2$ \label{$R_2$} ]{%
      \includegraphics[ width=0.33\linewidth]{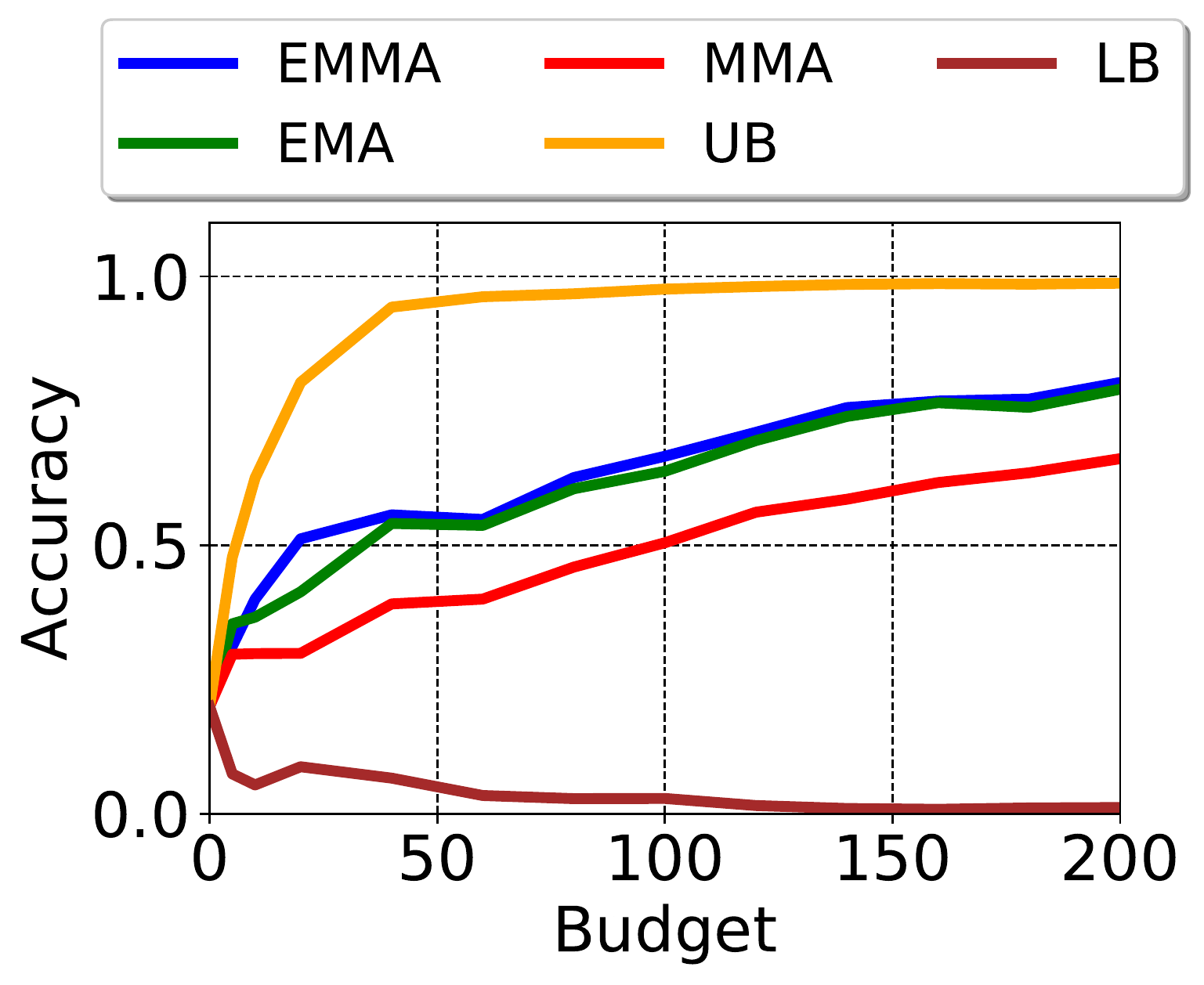}}
\hspace*{-0.3em}
   \subfloat[$R_3$ \label{$R_3$}]{%
      \includegraphics[ width=0.33\linewidth]{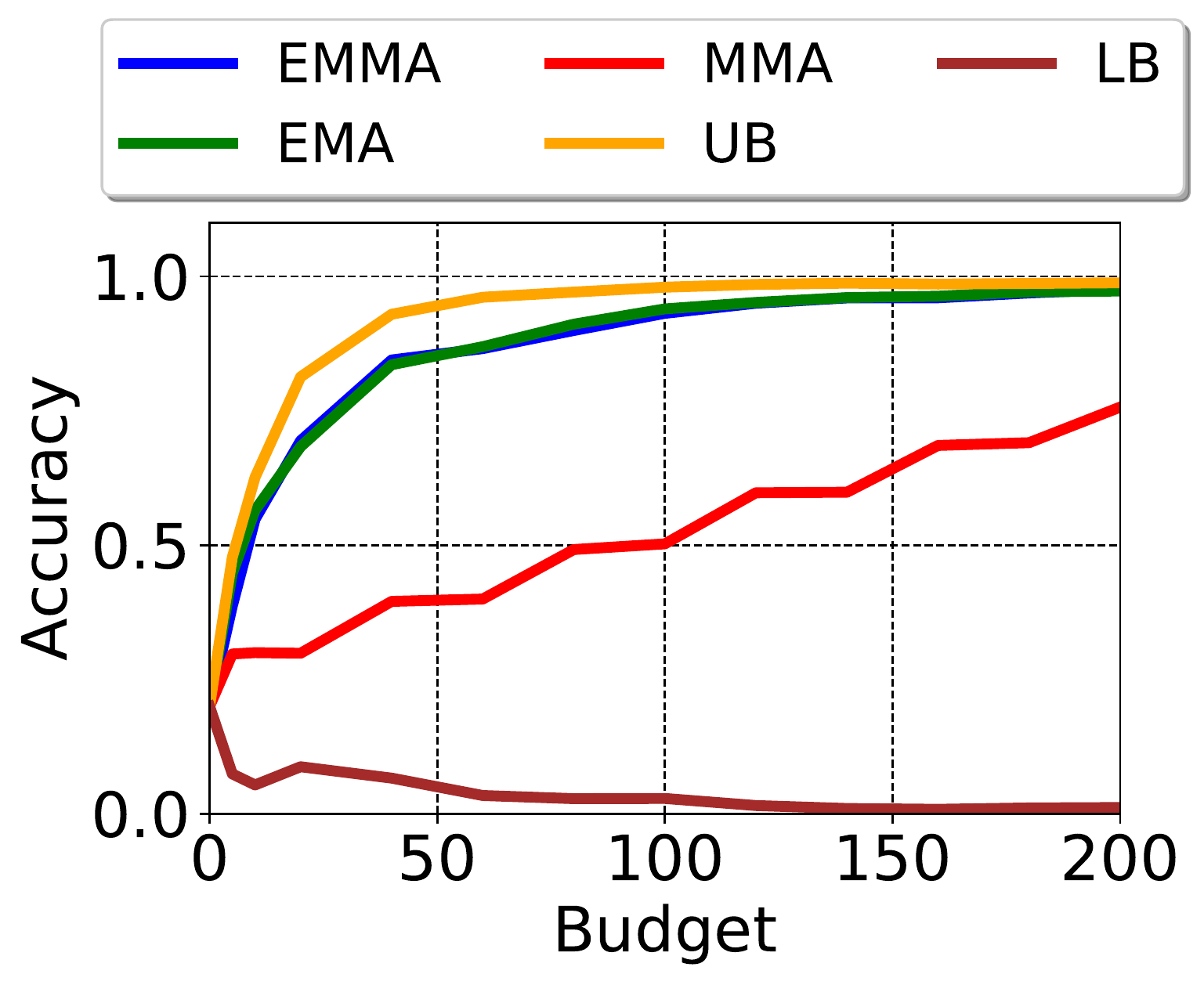}}
     \vspace{-2mm}
\caption{\label{fig:fig4} Activity recognition accuracy of various active learning algorithms on DAS dataset using three different memory retention levels, $R_1$, $R_2$, and $R_3$. }
\vspace{-4mm}
\end{figure}

The results show that the performance improvement due to using EMMA over EMA and MMA is most notable when the amount of budget is small and the memory is weak (i.e., memory retention level of $R_1$). This observation emphasizes the importance of considering both uncertainty and memory retention in health monitoring applications where data collection is extremely costly and end-users may be cognitively impaired due to aging. It can be seen from the results that EMMA achieves an average accuracy that is $13.5$\% higher than that of EMA and $14$\% higher than that of MMA for cases of weaker memory and smaller budget. Moreover, we note that, for all datasets, as the memory becomes stronger, EMMA and EMA methods converge and achieve accuracy values closer to the upper-bound. Additionally, the performance of EMMA is at most $20$\% less than the experimental upper-bound and up to $80$\% higher than the experimental lower-bound, on average. Furthermore, for HART dataset, MMA converges with EMMA and EMA as well, while for other datasets, the performance of MMA improves very slower. The reason is that MMA tends to select the observations sequentially in time and ignores the informativeness of the observations. Therefore, if the oracle repeats an activity for a long time (e.g., `sleeping', `watching TV'), this method continues to querying for the same activity, resulting in a model that is incapable of recognizing other activities. Consequently, the active learning process needs to query for many new observations in order to learn new activities (see results for DAS and AReM datasets). Note, as mentioned previously, AReM contains less discriminative features compared to the two other datasets. This is the reason why the performance of MMA is closer to the lower-bound on AReM.

A notable observation is difference in the performances of MMA and EMMA on HART and AReM datasets as shown in \figref{fig3} and \figref{fig5}. Specifically, MMA outperforms EMMA slightly in larger budgets in \figref{fig3}, but it performs much worse than EMMA in \figref{fig5}. This is because of switching fast between unknown activities in the HART dataset (\figref{fig3}), which results in MMA learning new activities at a faster pace compared to how MMA learns in the AReM dataset (\figref{fig5}). This observation suggests that the performance of MMA is highly dependent on the dataset (i.e., sequence of activities that occur). 

We can also compare the performance of EMMA with that of EMA in these two datasets. As shown in \figref{fig3}, EMMA outperforms EMA on HART. However, as shown in \figref{fig5}, EMMA and EMA achieve a comparable performance on AReM. This can be explained as follows. Because activities in the HART dataset change frequently over time, considering time delays during optimization (i.e., in EMMA) contributes to an improved performance. In contrast, the performance of EMA is highly dependent of the nature of the occurring events.

%switches fast between activities and thus considering time delays in optimization helps a lot with performance (as opposed to AReM dataset). So in \figref{fig3}, EMMA using memory retention, outperforms EMA (not using memory retention). So, we showed EMA performance is dependent on dataset, as opposed to EMMA. \figref{fig4} is the case in between these two. So, EMMA has the most accurate and stable performance compared to competing algorithms}
 
%Please note that, we again, used the four metrics of accuracy, precision, recall and F1-score in this section's experiments and since they showed the same trend as accuracy and because of limited space, they are not reported here.

\begin{figure}[t!]
	\subfloat[$R_1$ \label{$R_1$}]{%
		\includegraphics[ width=0.33\linewidth]{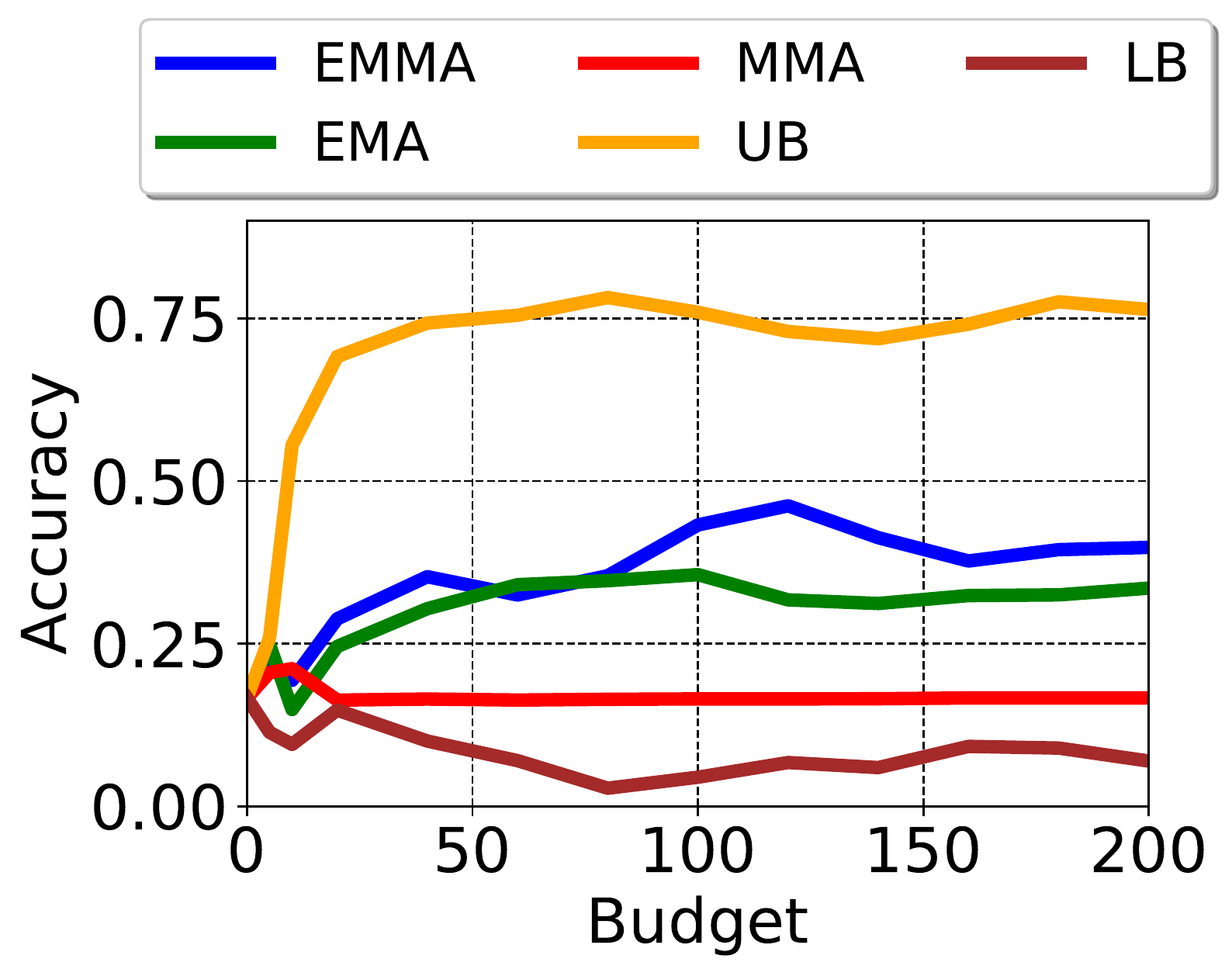}}
	\hspace*{-0.3em}
	\subfloat[$R_2$ \label{$R_2$} ]{%
		\includegraphics[ width=0.33\linewidth]{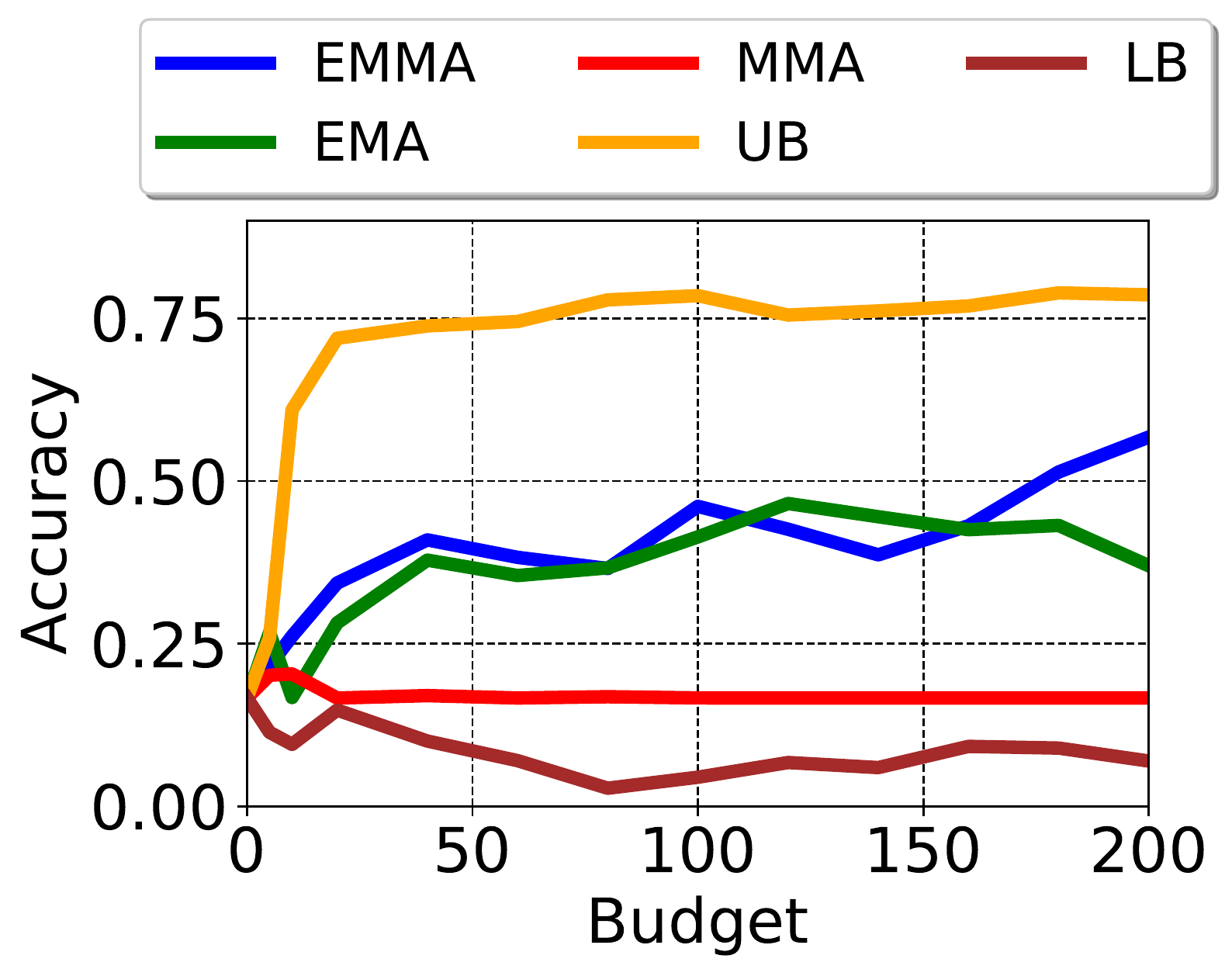}}
	\hspace*{-0.3em}
	\subfloat[$R_3$ \label{$R_3$}]{%
		\includegraphics[ width=0.33\linewidth]{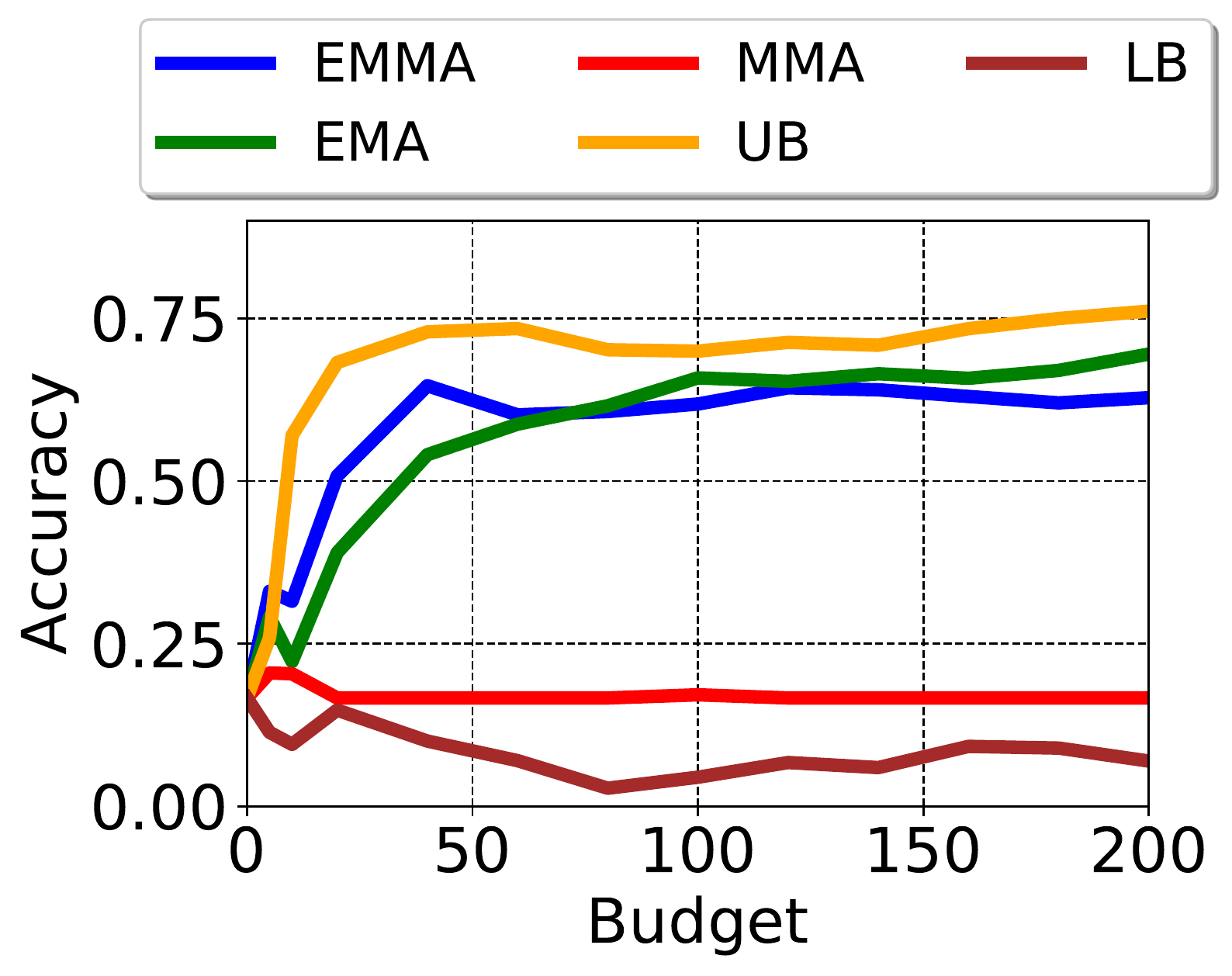}}
	\vspace{-2mm}    
	\caption{\label{fig:fig5} Activity recognition accuracy of various active learning algorithms on AReM dataset using three different memory retention levels, $R_1$, $R_2$, and $R_3$. }
	\vspace{-4mm}
\end{figure}

\section{Conclusions and Future Work}
Prior research on active learning takes informativeness of data and query budget into account when selecting the data for annotation. In this paper, we showed that cognitive constrains of the oracle are of significant importance that can greatly compromise the active learning performance. We posed an optimization problem to combine data uncertainty with memory retention for its use in ubiquitous and mobile computing applications. We showed that this problem is NP-hard and derived a greedy approximation algorithm to solve the proposed mindful active learning problem. Our extensive analyses on three publicly available datasets showed that EMMA achieves up to $97$\% accuracy for activity recognition using wearable sensors. We also showed that integrating memory retention improves the active learning performance by $16$\%.

Our results indicate that the performance of EMMA improves when oracle's memory is stronger or the query budget is higher. However, we noticed that increasing the budget does not improve the accuracy when highly accurate labels are available due to strong memory retention. We also noted that the active learning performance can decrease with increased budget if the oracle's memory is weak. Finally, the gap between the performance of EMMA and other algorithms is most notable with small budgets and weak memories. 

In this paper, we focused on pool-based active learning. Our ongoing work involves developing mindful active learning strategies that make query decisions on-the-fly as wearable sensor data become available in real-time.

Our future work will also focus on conducting user studies that involve patients with cognitive impairments where we can assess the oracle's memory retention quantitatively. Moreover, in this paper, we assumed that the oracle responds to an issued query instantaneously. We plan to investigate active learning solutions that take into account the possibility of delayed responses through context-sensitive active learning.

%In addition, we simulated the memory strength of the end-user in this paper. 

\section*{Acknowledgments}
This work was supported in part by the United States National Science Foundation under grant CNS-1750679. Any opinions, findings, conclusions, or recommendations expressed in this material are those of the authors and do not necessarily reflect the views of the funding organizations.

\newpage
\bibliographystyle{named}
\bibliography{myrefs}

\end{document}